\newtheorem*{theorem*}{Theorem}
\title{\LARGE \bf Control of A High Performance Bipedal Robot using Viscoelastic Liquid Cooled Actuators}
\author{Junhyeok~Ahn$^{1}$, Donghyun~Kim$^{2}$, Seunghyeon~Bang$^{3}$, Nick~Paine$^4$, Luis~Sentis$^{3\dag}$
\thanks{$^{1}$ J. Ahn is with the Department of  Mechanical Engineering, The University of Texas at Austin, TX, 78712, USA {\tt\small junhyeokahn91@utexas.edu}}%
\thanks{$^{2}$ D. Kim is with the Department of  Mechanical Engineering, Massachusetts Institute of Technology, MA, 02139, USA {\tt\small alex.d.kim0821@gmail.com}}%
\thanks{$^{3}$S. Bang and L. Sentis are with the Department of Aerospace Engineering and Engineering Mechanics, The University of Texas at Austin, TX, 78712, USA {\tt\small bangsh0718@utexas.edu, lsentis@austin.utexas.edu}}%
\thanks{$^{4}$N. Paine is with Apptronik, Austin, USA {\tt\small npaine@apptronik.com}}
\thanks{$^{\dag}$ L. Sentis is the corresponding authors.}
}
\begin{document}

\maketitle
\thispagestyle{empty}
\pagestyle{empty}

\begin{abstract}

This paper describes the control, and evaluation of a new human-scaled biped robot with liquid cooled viscoelastic actuators (VLCA). Based on the lessons learned from previous work from our team on VLCA, we present a new system design embodying a Reaction Force Sensing Series Elastic Actuator and a Force Sensing Series Elastic Actuator. These designs are aimed at reducing the size and weight of the robot's actuation system while inheriting the advantages of our designs such as energy efficiency, torque density, impact resistance and position/force controllability. The robot design takes into consideration human-inspired kinematics and range-of-motion, while relying on foot placement to balance. In terms of actuator control, we perform a stability analysis on a Disturbance Observer designed for force control. We then evaluate various position control algorithms both in the time and frequency domains for our VLCA actuators. Having the low level baseline established, we first perform a controller evaluation on the legs using Operational Space Control. Finally, we move on to evaluating the full bipedal robot by accomplishing unsupported dynamic walking. 
\end{abstract}

\section{INTRODUCTION}
\label{sec:introduction}

During the DARPA Robotics Challenges (DRC), several international teams explored the use of humanoid robots in emergency response tasks. Some of the robots employed in the DRC were humanoid bipeds. Biped robots could have benefits over other embodiments in tasks such as maneuvering in tight spaces. 

In terms of design, the DRC humanoid robots SCHAFT\cite{ito2014development} and JAXON\cite{kojima2015development} were built to improve heat dissipation and thermal management capabilities through the use of liquid-cooled electric actuators. Another two humanoids, ESCHER\cite{knabe2015design} and VALKYRIE\cite{Radford:2015ca, paine2015actuator} were built with force control and ground impact resistance via the use of Series Elastic Actuators (SEAs). These robots possess actuated ankles for locomotion which is good for standing manipulation but at the same time result in a larger leg distal mass. In turn, it slows down the stepping strides and makes the robot heavier. Biped robots like ATRIAS, CASSIE, and MERCURY \cite{hubicki2016atrias, hurst2019, dh_ijrr_2019} are either void of ankle actuation or have small motors for the ankles. This choice is intended to significantly reduce leg distal mass and allow for faster swing cycles during dynamic locomotion. Faster swing cycles can have benefits for collision recovery.

\begin{figure}
    \centering
    \includegraphics[width=0.7\linewidth]{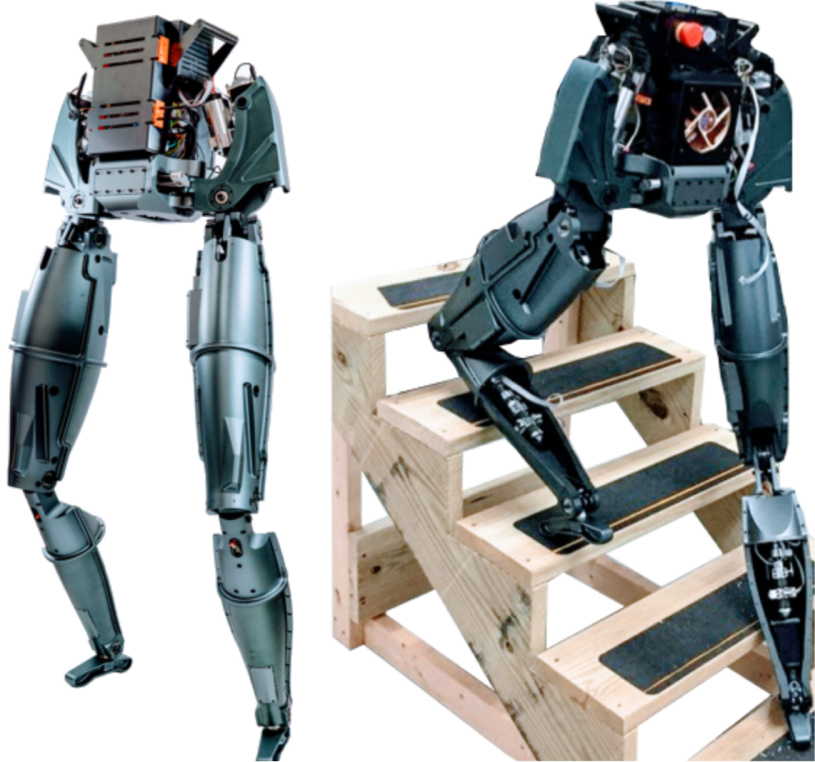}
    \caption{DRACO is a high performance biped equipped with liquid cooled VLCAs. It is designed to nimbly maneuver in cluttered environment.}
    \label{fig:Draco}
    \vspace{-4mm}
\end{figure}

\begin{figure*}
    \centering
    \includegraphics[width=\linewidth]{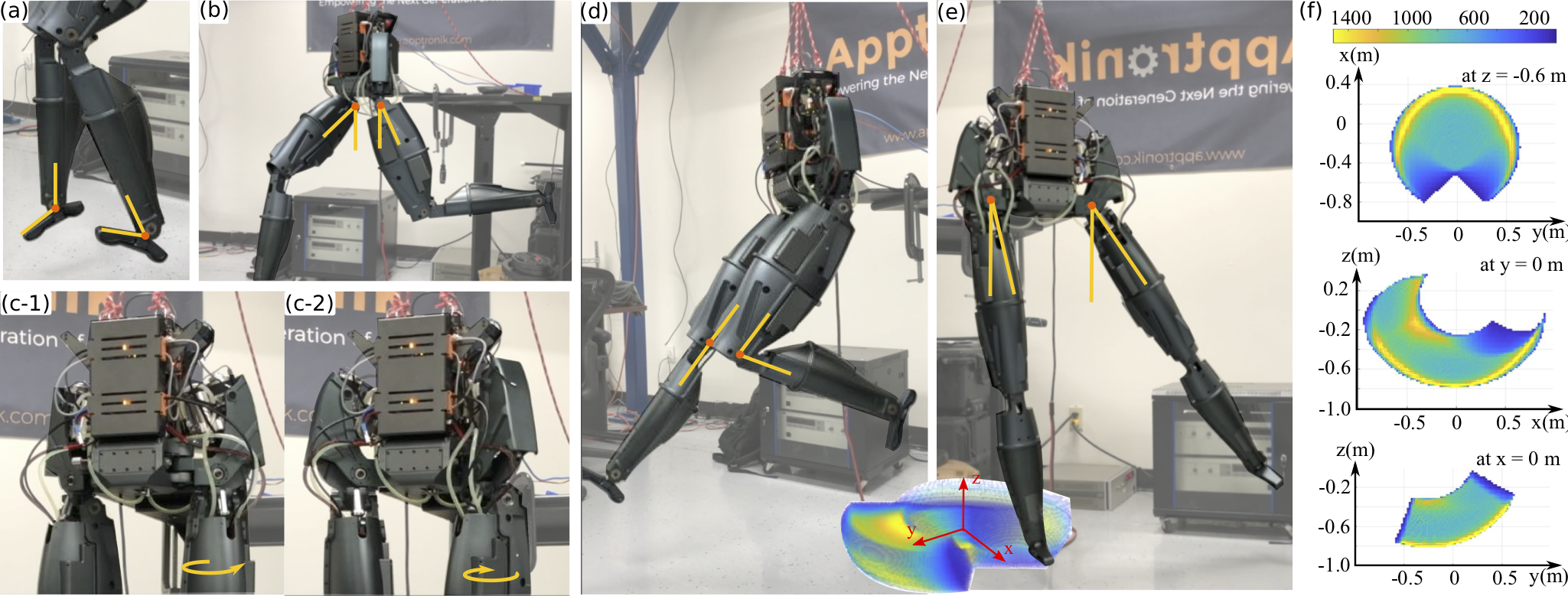}
    \caption{DRACO's joints approximate the range of motion of the adult human body with 1.5, 1.19, 2.27, 2.14, 1.55 \si{\radian} for hip rotation, abduction, flexion, knee flexion and ankle flexion respectively. (a), (b), (c-1), (c-2), (d), and (e) shows ankle flexion/extension, hip flexion/extension, hip external rotation, hip internal rotation, knee flexion/extension and hip abduction/adduction. Color maps in (e) and (f) represent the density score for the workspace of the foot.}
    \label{rom}
    \vspace{-4mm}
\end{figure*}

ATRIAS was built to mimic a spring-mass model for facilitating unsupported passive-ankle dynamic walking. Since then it has achieved walking in various terrains and running \cite{ramezani2014performance,sreenath2013embedding}. However, its controller assumes a mechanical approximation of the robot design to an idealized pendulum model with most of its mass located at the Center-of-Mass (CoM). This can limit the distribution of masses and topology of unsupported passive-ankle robots. Besides ATRIAS there is not much research published on the design of robots capable of unsupported passive-ankle walking. 

Motivated by agile locomotion, our paper introduces new bipedal robot technology with emphasis on power-dense electric actuation and the achievement of unsupported dynamic balancing. DRACO, illustrated in Fig.~\ref{fig:Draco}, is an adult-size lower-body biped robot designed to maneuver nimbly in cluttered environments. The lower-body mechanical architecture was designed to reduce volume and weight for the rated height and payload. Three electric actuators drive each leg's hip motions. One located near the pelvis providing hip rotation. Another one located on the outer lateral hip providing hip abduction/adduction. And another one located on the upper thigh providing hip flexion/extension. Knee flexion/extension is provided by an actuator located on the lower part of the thigh. Finally, ankle pitch is provided by a small actuator located on the leg's calf with the purpose of statically balancing the robot. Thus during walking, DRACO is designed to perform passive-ankle dynamic locomotion without relying on active ankle torques.

DRACO is actuated by Viscoelastic Liquid Cooled Actuators (VLCAs) which include viscoelastic elements in the drivetrain in order to improve joint position controllability as reported in \cite{kim2018investigations}. Liquid cooled Reaction Force Sensing Series Elastic Actuators (RFSEA) are used for the hip and knee joints, reducing actuation weight while increasing energy efficiency, torque density, impact resistance and position/force controllability. Liquid-cooled Force Sensing Series Elastic Actuators (FSEA) drive the pitch ankle joints for ankle flexion/extension control. To increase heat dissipation on the electric motors, these VLCAs use liquid-cooling motor jackets \cite{paine2015design} enabling 2.5x higher continuous torques on all joints compared to conventional electric actuators.

For actuator control, we use a decoupled control strategy as described in \cite{paine2015actuator}, which relies on a rigid joint model. Each low level actuator controller acts as an idealized force or joint position source to facilitate the use of control architectures for multi Degree Of Freedoms (DOFs), such as Whole Body Control (WBC)\cite{sentis2005synthesis}. Previously, we showed high fidelity control of SEAs via Disturbance Observer (DOB) controllers designed with the assumption of a time invariant nominal SEA model \cite{paine2014design} and the improvement of joint position controllability by placing viscoelastic materials on the actuator's drivetrain \cite{kim2018investigations}. In this paper, we prove the robust stabilization capabilities of our DOB controllers. We evaluate various joint position feedback controllers depending upon 1) using either motor and spring encoders versus linear potentiometers for feedback, and 2) whether to include force inner feedback loop to decrease mechanical friction and stiction. Finally, we implement and test Operational Space Control (OSC) \cite{khatib1987unified} and dynamic balance control using WBC \cite{dh_ijrr_2019}. We demonstrate accurate and stable actuator position tracking in the operational space and unsupported dynamic balancing with well-regulated motor temperatures thanks to the liquid cooling system.

The main contribution of this paper is on the control and evaluation of a high performance biped robot with VLCAs that can achieve unsupported passive-ankle balancing. Another contribution is the study of stability and performance analysis for the proposed joint and multi-DOF controllers. The remainder of the paper is organized as follows: Section~\ref{sec:mechatronic_design} presents the mechatronic design of DRACO. Stability and performance of VLCA are studied in Section~\ref{sec:single_actuator_controllers}. OSC and unsupported dynamic balancing via WBC are described in Section~\ref{sec:multi_dof_controllers}. Finally, Section~\ref{sec:conclusion} concludes the paper.


\section{Mechatronic Design}
\label{sec:mechatronic_design}
\begin{figure*}[t]
    \centering
    \includegraphics[width=1.0\linewidth]{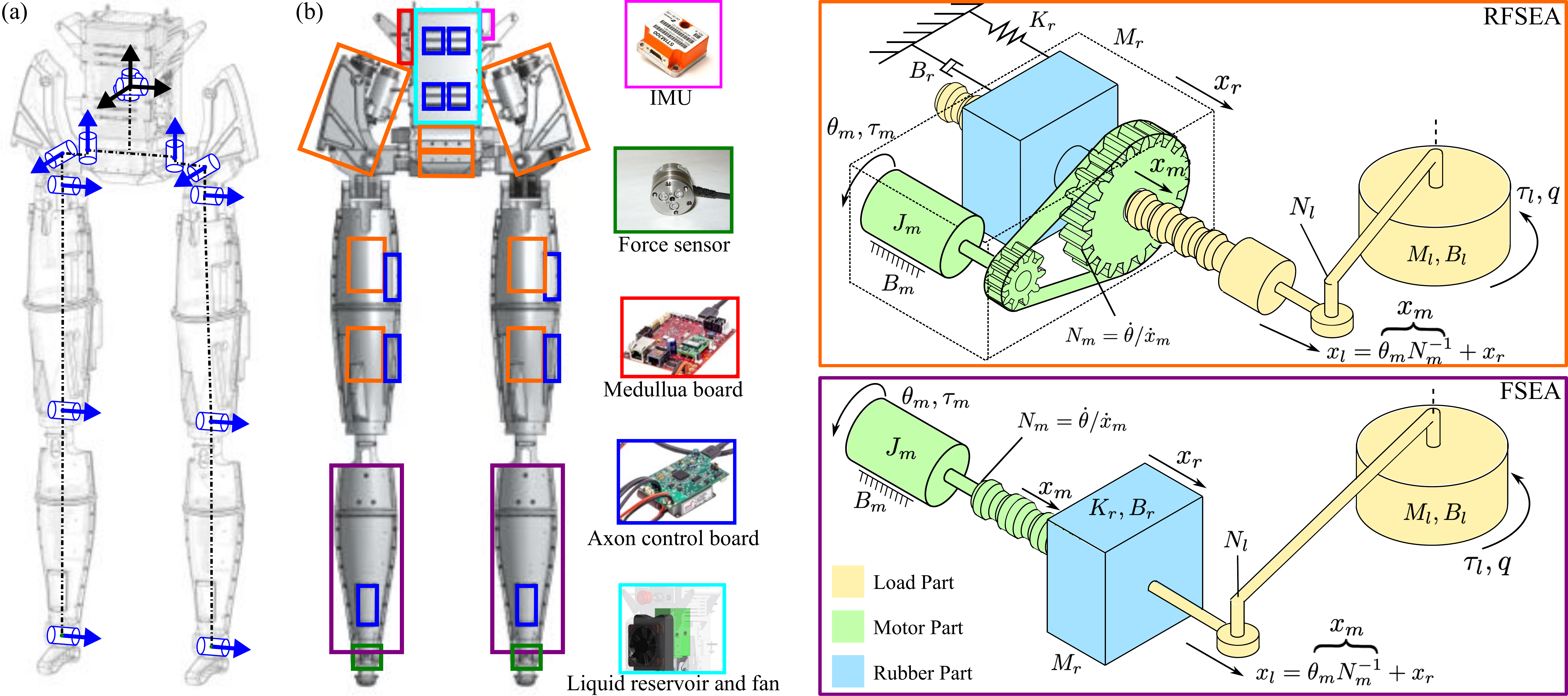}
    \caption{(a) shows generalized coordinate of the robot. Floating base frame and actuated DoFs are represented as black and blue axis, respectively. (b) Mechatronic Parts and Schematic diagrams of the VLCAs.}
    \label{fig:mechanical_overview}
    \vspace{-4mm}
\end{figure*}

\subsection{Mechatronics Overview}
\label{sec:mechatronics_overview}

The DRACO biped is 1.30 \si{\meter} tall, weighs 37 \si{\kg}, and achieves a similar range of motion than an adult human regarding leg and hip motions. DRACO has ten actuated DOFs including three for the hip structure, one for the knee and one for the ankle. The range of motion of each DOF and workspace of the robot's foot are shown in Fig.~\ref{rom}. Generalized coordinate of DRACO, and mechatronic components including actuators, auxiliary sensors and electronic boards are illustrated in Fig.~\ref{fig:mechanical_overview}.

Unlike many humanoid robots, it does not have ankle roll actuation. This allows to significantly reduce distal mass and therefore enhance swing speed motion. Based on the lessons learned from previous works \cite{slovich2012building, Jung, paine2014design}, the hardware has been designed with performance and mechanical safety consideration while reducing it's overall weight and the risk of overheating motors. 

\subsection{Viscoelastic Liquid Cooled Actuators}%
\label{sub:viscoelastic_liquid_cooled_actuator}

To achieve the design objectives and protection against external impacts, VLCAs are employed to actuate the robot joints. VLCA is a family of prismatic SEA with a viscoelastic material instead of metal springs and active liquid cooling, first introduced in our previous work \cite{kim2018investigations}. There we investigated its power density, energy efficiency, high-fidelity force control and joint position control. In this section, we introduce RFSEA type VLCA actuators and FSEA type VLCA actuators used in DRACO.

The schematic diagrams of the VLCAs and the nomenclatures are shown in  Fig.~\ref{fig:mechanical_overview}. In the diagram, $J_m(\si{\kilo\gram\square\meter})$, $B_m(\si{\newton\second\per\meter})$, $M_r(\si{\kilo\gram})$, $B_{r}(\si{\newton\second\per\meter})$, $M_l(\si{\kilo\gram})$ and
$B_l(\si{\newton\second\per\meter})$ are motor inertia, motor damping coefficient, elastomer mass, elastomer damping coefficient, load mass and load damping coefficient. $\theta_m(\si{\radian})$, $x_r(\si{\meter})$ and $x_l(\si{\meter})$ are displacement of the motor, elastomer and the output load which are the actuator states. $\tau_m (\si{\newton \meter})$ is motor torque which is the actuator input, and $\tau_l(\si{\newton \meter})$ and $q(\si{\radian})$ are the joint torque and position which are the joint output. $N_m (\si{\radian \per \meter})$ ($N_l(\si{\radian \per \meter})$, respectively) is the speed reduction ratio of the motor (output joint, respectively) provided by the ball screw (the actuator position, respectively). To measure the actuator states, we place a quadrature encoder at motor and elastomer side to measure motor angle, $\theta_{m}$, and elastomer deflection, $x_r$. In addition, we adopt a linear potentiometer to measure absolute position, $x_l$, of the actuator.

The RFSEA transmits mechanical power when the BLDC motor turns a ball nut via a low-loss timing belt and pulley, which causes a ball screw to exert a force to the actuator’s output. Rigid assembly consists of the electric motor, the ball screw, and the ball nut connected in series to the compliant viscoelastic element, which in turn connects to the mechanical ground of the actuator. When the actuator exerts a force, it causes the viscoelastic element to contract and extend. The liquid cooling system allows to increase the maximum continuous current by a factor of 2.5 without thermal failure.

The FSEA liquid cooled actuator transmits mechanical via a BLDC motor in series to the ball screw. The FSEA type VLCA includes a compliant element between the ball screw and the actuator output. As a result of the drivetrain, it provides a long, thin and lightweight design that is ideal to incorporate in the calf of the leg. For more detailed information, readers are referred to our previous work \cite{kim2018investigations}.

\section{Actuator Control}
\label{sec:single_actuator_controllers}

In Fig.~\ref{fig:block_diagram}(a), we outline our overall robot control structure which contains a multi-joint control block coordinating multiple decoupled joint controllers. In order to control the robot's dynamic locomotion behavior effectively, the low level actuator controllers are designed to deliver certain high performance specifications. In our recent work, \cite{kim2018investigations}, we studied joint position controllability and torque controllability for liquid cooled viscoelastic actuators. In this section, we extend this analysis by studying the effects of different types of sensors for joint position feedback as well as performing a stability analysis of DOBs used for force control.
\begin{figure}
    \centering
    \includegraphics[width=\linewidth]{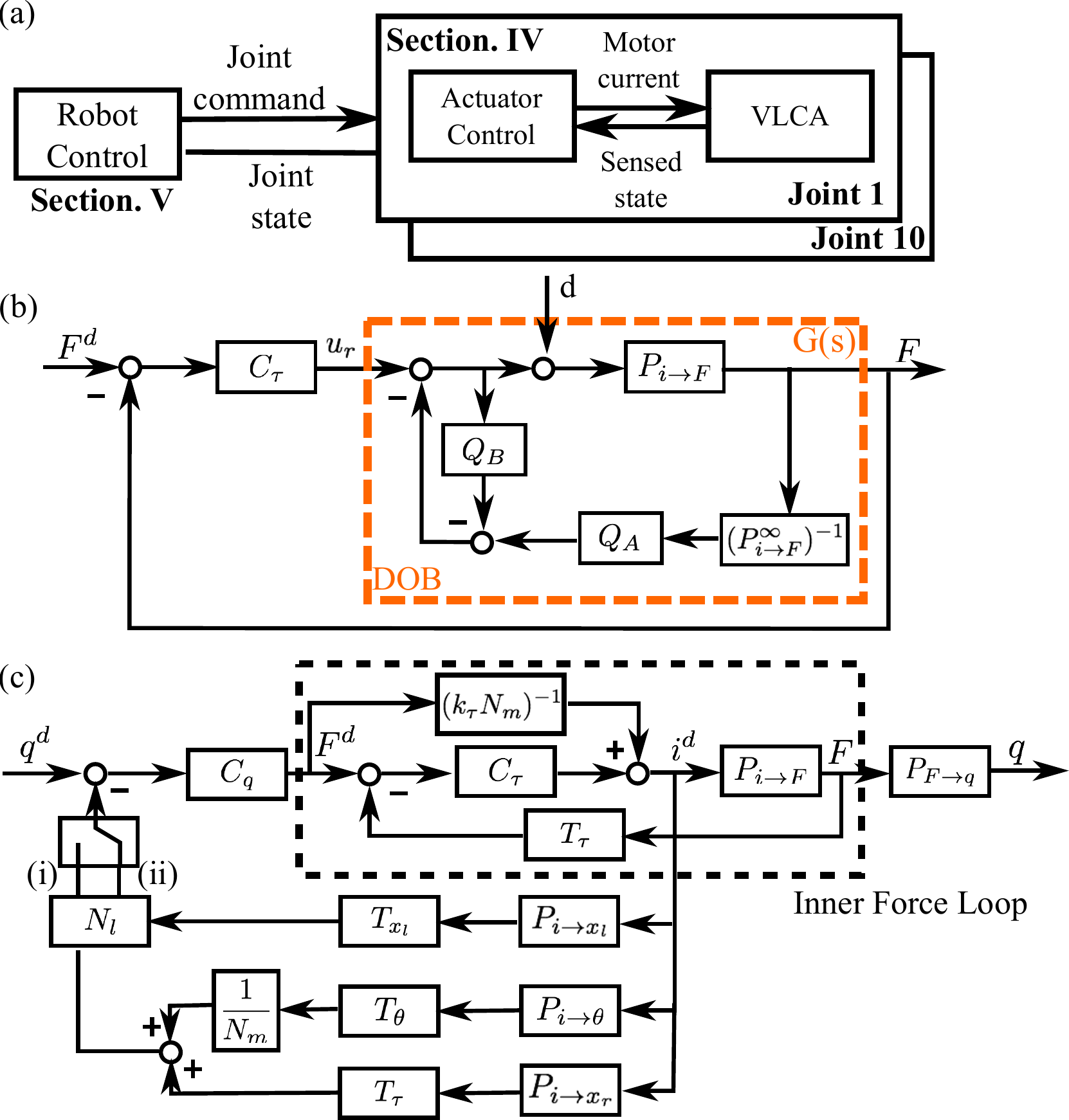}
    \caption{(a) shows our decoupled control approach. (b) represents a block diagram for force feedback control incorporating a DOB structure. (c) shows a block diagram of our actuator position feedback controller.}
    \label{fig:block_diagram}
    \vspace{-5mm}
\end{figure}
\subsection{Model \& Identification}
\label{sub:system_modeling_identification}

Let us consider the transfer functions for RFSEA actuators \cite{park2016dynamic} using the nomenclatures shown in Fig.~\ref{fig:mechanical_overview},
\begin{equation}
    \label{eq:plants1}
    \begin{split}
        P_{i \rightarrow \theta_m}(s) &= \frac{k_{\tau} P_m(s)\Big(P_r(s) + P_l(s)\Big)}{N_m^2P_m(s) + P_l(s) + P_r(s)} \\
        P_{i \rightarrow x_r}(s) &= -\frac{k_{\tau} P_m(s) P_r(s)}{N_m \Big( N_m^2P_m(s) + P_l(s) + P_r(s) \Big)} \\
        P_{i \rightarrow F}(s) &= K_r P_{i \rightarrow x_r}(s) \\
        P_{F \rightarrow q}(s) &= N_l P_l,
    \end{split}
\end{equation}
where $i(\si{\ampere})$, $F(\si{\newton})$ and $k_{\tau}(\si{\newton\meter\per\ampere})$, are motor current, actuator force and motor constant. $P_{\circ \rightarrow \triangle}(s)$ represents transfer functions with input signal $\circ$ and output signal $\triangle$. In addition $P_m$, $P_r$ and $P_l$ correspond to the motor, elastomer, and load transfer functions with expressions,
\begin{equation}
    \label{eq:plants2}
    \begin{split}
        P_m(s) &= \frac{1}{J_m s^2 + B_m s} \\
        P_r(s) &= \frac{1}{M_r s^2 + B_{r} s + K_r} \\
        P_l(s) &= \frac{1}{M_l s^2 + B_l s},
    \end{split}
\end{equation}
Note that $M_l$ in the above equation is indefinite since it varies with the joint configuration and the contact state of the robot reflected onto the actuator. $P_{i \rightarrow F}^{\infty}$ in Fig.~\ref{fig:block_diagram}(c) represents the same plant as $P_{i \rightarrow F}$ but with an infinite load mass $M_l \simeq \infty$, i.e. an ideal rigid contact.

$k_\tau$ and $N_m$ are obtained from product sheets, $N_l$ is derived from a pre-computed look-up table, and $K_r$ is approximated by measuring elastomer displacements given known applied forces. To obtain other parameters, we do so via system identification techniques. We generate a motor current following an exponential chirp signal, with frequencies between 0.01 $\si{\Hz}$ to 200 $\si{\Hz}$ and measure actuator force as an output signal while 1) constraining the actuator output to a fixed position ($M_l \simeq \infty$) and 2) letting the actuator to move freely. Note that by fixing the actuator output, its open loop transfer function becomes second order such that motor and elastomer parameters can be identified independently. Combining the system identification tests with constrained and free moving outputs, we identify the rest of the parameters as shown in Table~\ref{tb:actuator_id} and compute the bode plots shown in Fig.~\ref{fig:single_joint_control}(a).

\begin{table} \centering
\caption{Actuator Parameters}
\label{tb:actuator_id}
\begin{tabular}{>{\centering}m{0.09\columnwidth} %
                >{\centering}m{0.09\columnwidth} %
                >{\centering}m{0.15\columnwidth} %
                >{\centering}m{0.09\columnwidth} %
                >{\centering}m{0.07\columnwidth} %
                >{\centering}m{0.09\columnwidth} %
                >{\centering}m{0.09\columnwidth} %
                @{}m{0pt}@{}}
\specialrule{1.5pt}{1pt}{1pt}
{$J_m$}
& {$B_m$}
& {$M_l$}
& {$B_l$}
& {$M_r$}
& {$B_r$}
& {$K_r$}
&\\[1mm]
\hline
\hline 
\num[output-exponent-marker = \text{e}]{1.6e-5} &
\num[output-exponent-marker = \text{e}]{2.0e-4} &
2953 $\sim$ $\infty$ &
\num[output-exponent-marker = \text{e}]{2.0e4} &
1.3 &
\num[output-exponent-marker = \text{e}]{1.6e4} &
\num[output-exponent-marker = \text{e}]{9.5e6} & \\[1mm]
\hline
\end{tabular}
\vspace{-4mm}
\end{table}

\subsection{Force Feedback Control}
\label{sec:actuator_force_control}

Many different methods have been proposed for controlling series elastic actuators using force feedback. \cite{pratt2002series} studied high fidelity force control of SEAs measuring force via compression of a compliant element and \cite{kong2012compact, paine2015actuator} studied PID, model-based and DOB structures to achieve high fidelity force tracking. Since our actuator model considers a variable load, unknown a priori, we first design a nominal plant based on an infinite load mass assumption. Fig.~\ref{fig:block_diagram}(b) depicts our force feedback controller with a DOB where $Q_A, Q_B$ and $C_{\tau}$ correspond to low pass filters and a PD controller respectively, with expressions, 
\begin{equation}
    \label{eq:frc_ctrl}
    \begin{split}
        C_{\bullet} &= P_{\bullet} + D_{\bullet} s \\
        Q_{\bullet} &= \frac{1}{(s/\omega_c)^2 + \sqrt{2}(s/\omega_c) + 1}.
    \end{split}
\end{equation}
Here, $P_{\bullet}$ and $D_{\bullet}$ are proportional and derivative gains, and $\omega_c$ is the cut-off frequency of the filter defined by $Q_{\bullet}$. In addition, $P_{i \rightarrow F}$ represents the actual actuator plant with motor current as input and actuator force as output. $P_{i \rightarrow F}^{\infty}$ represents a model of the actuator plant using the infinite load mass assumption.

In this section, we provide a formal analysis on the robustness and stability of DOB-based controllers under uncertain loads. In order to study the performance of our DOB controller given the time varying output load we will apply perturbation theory analysis \cite{shim2016yet}. We derive the state-space equations of our DOB, $G(s)$ in Fig.~\ref{fig:block_diagram}(b) by using the method explained in \cite{shim2007state}, resulting in the equations,
\begin{equation}
    \label{eq:ss_realization_1}
    \begin{split}
        x_1 &= F\\
        \dot{x}_{1} &= x_{2} \\
        \dot{x}_{2} &= \bm{\psi}^{\top} z + \bm{\phi}^{\top} \bm{x} + \bm{g} \bm{C} (\bm{\zeta} - \bm{\xi}) + \bm{g} (u_r + d) \\
        \dot{z} &= \bm{S} z + \bm{G} \bm{x} \\
        \dot{\overline{z}} &= \overline{\bm{S}} \overline{z} + \overline{\bm{G}} \bm{x},
    \end{split}
\end{equation}
and

\begin{equation} \label{eq:ss_realization_2}
    \begin{split}
        &\tau \dot{\bm{\xi}} = \bm{A} \bm{\xi} + \frac{\bm{g}}{\overline{\bm{g}}}\bm{BC}(\bm{\zeta} - \bm{\xi}) 
        + \frac{1}{\overline{\bm{g}}}\bm{B}\big( -\overline{\bm{\psi}}^{\top} \overline{z} + \bm{\psi}^{\top} z \\
        & \qquad \:\:+(\bm{\phi} - \overline{\bm{\phi}})^{\top} \bm{x} \big)+ \frac{\bm{g}}{\overline{\bm{g}}}\bm{B}(u_r + d) \\
        &\tau \dot{\bm{\zeta}} = \bm{A} \bm{\zeta} + \bm{BC}(\bm{\zeta} - \bm{\xi}) + \bm{B}u_r.
    \end{split}
\end{equation}
Here, $u_r$ and $F$ are the input reference and the measured force respectively. $\bm{x} = [x_1, x_2]^{\top}$ and $z \in \mathbb{R}$ are the internal states and zero dynamics state of the plant $P_{i \rightarrow F}$. The 2D vectors, $\bm{\xi}$ and $\bm{\zeta}$ represent states corresponding to the filters $Q_A$ and $Q_B$ respectively. The matrices $\bm{\psi}$, $\bm{\phi}$, $\bm{g}$, $\bm{S}$, $\bm{G}$, $\bm{A}$, $\bm{B}$ and $\bm{C}$ are unknown but bounded plant parameters. In addition, the state and the plant parameters expressed with symbol overlines represent the same vector and operators for the nominal plant $P_{i \rightarrow F}^{\infty}$. For example, $\overline{\bm{S}},~\overline{\bm{G}},~\overline{\bm{\psi}},~\overline{\bm{\phi}},~\overline{\bm{g}}$ and $\overline{z}$ represent plant parameters and zero dynamics for the nominal plant. Based on the identified parameters of our system shown in table~\ref{tb:actuator_id} and the second order Butterworth filter of Eq.~\eqref{eq:frc_ctrl} with $60 \si{\Hz}$ cutoff frequency for both $Q_A$ and $Q_B$, the state-space parameters become
\begin{equation*}
\begin{split}
    &\bm{A} = \begin{bmatrix} 0&1\\-1&-\sqrt{2} \end{bmatrix},~\bm{B}= \begin{bmatrix} 0 \\ 1 \end{bmatrix}, ~\bm{C} = \begin{bmatrix} 1 & 0 \end{bmatrix}, \\
    &\overline{\bm{\psi}} = -9.53e3,~ \overline{\bm{\phi}} = 1e4[-1.49,~-0.01], \\ &\overline{\bm{S}}=-0.0433,~\overline{\bm{G}}= \begin{bmatrix}
    1 & 0
\end{bmatrix},~\overline{\bm{g}}=1.31e4.
\end{split}
\end{equation*}

Eq.~\eqref{eq:ss_realization_1} and \eqref{eq:ss_realization_2} are in the standard form for singular perturbation analysis where $\tau$ represents the perturbation parameter \cite{khalil2002nonlinear}. The variables $\bf{x},~z,~\overline{z}$ are called slow dynamics while the variables $\bm{\xi}$ and $\bm{\zeta}$ are called fast dynamics and the following theorem holds:
\begin{theorem*}
    The proposed DOB structure (Fig.~\ref{fig:block_diagram}(b)) is robustly stable and converges to the performance of the nominal plant under uncertain loads.
\end{theorem*}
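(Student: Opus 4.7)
The plan is to apply Tikhonov's singular perturbation theorem (as stated in the cited Khalil reference) to the two-time-scale system formed by the slow dynamics in Eq.~\eqref{eq:ss_realization_1} and the fast dynamics in Eq.~\eqref{eq:ss_realization_2}. The perturbation parameter $\tau$ is inversely proportional to the filter cutoff $\omega_c$, so $\tau \to 0$ corresponds to making $Q_A$ and $Q_B$ fast relative to the plant. The goal is to show (i) exponential stability of the boundary-layer system uniformly over the uncertain load $M_l$, and (ii) that the reduced slow subsystem coincides with the nominal closed loop built around $P_{i \to F}^{\infty}$.

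First I would analyze the boundary layer. Freezing $\bm{x}, z, \overline{z}, u_r, d$ and rescaling time by $\tau$, the fast subsystem becomes a linear time-invariant system whose dynamics matrix is
\begin{equation*}
    \begin{bmatrix} \bm{A} - \tfrac{\bm{g}}{\overline{\bm{g}}}\bm{B}\bm{C} & \tfrac{\bm{g}}{\overline{\bm{g}}}\bm{B}\bm{C} \\ -\bm{B}\bm{C} & \bm{A} + \bm{B}\bm{C} \end{bmatrix}.
\end{equation*}
With $\bm{A}, \bm{B}, \bm{C}$ from the second-order Butterworth realization and the bounded gain ratio $\bm{g}/\overline{\bm{g}}$ implied by $M_l \in [2953,\infty]$, I would verify that this matrix is Hurwitz uniformly in the admissible parameter set. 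This delivers a unique, exponentially attracting quasi-steady-state manifold $\bm{\xi} = \bm{\xi}^{*}(\bm{x},z,\overline{z},u_r,d)$, $\bm{\zeta} = \bm{\zeta}^{*}(\bm{x},z,u_r)$, obtained by zeroing the left-hand sides of Eq.~\eqref{eq:ss_realization_2}.

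Next I would derive the reduced slow system by substituting $\bm{\xi}^{*}, \bm{\zeta}^{*}$ back into Eq.~\eqref{eq:ss_realization_1}. The key algebraic observation is that the DOB cancellation built into the mismatch term $\bm{g}\bm{C}(\bm{\zeta}^{*} - \bm{\xi}^{*})$ exactly annihilates the uncertain contributions $\bm{\psi}^{\top} z$, $(\bm{\phi} - \overline{\bm{\phi}})^{\top}\bm{x}$, and the disturbance $d$, so that the reduced $\bm{x}$-dynamics (together with the $z, \overline{z}$ zero dynamics) become identical to the closed loop obtained by driving the nominal plant $P_{i \to F}^{\infty}$ with $C_\tau$ and reference $u_r$. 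Since the latter is, by construction, an internally stable nominal design for the infinite-load model, the reduced slow system is exponentially stable with the nominal input-output response.

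Finally, Tikhonov's theorem yields the two conclusions: there exists $\tau^{*} > 0$ such that for every $\tau \in (0, \tau^{*}]$ and every admissible load the full system is exponentially stable, and slow trajectories track those of the reduced (nominal) system on $[t_0, \infty)$ with an $O(\tau)$ error. The main obstacle I foresee is not invoking Tikhonov itself but establishing the uniform Hurwitz property of the boundary-layer matrix as $M_l$, and therefore $\bm{g}/\overline{\bm{g}}$, ranges over its full interval; one must exhibit an explicit bound (for example via a common Lyapunov function or a small-gain argument on $\bm{B}\bm{C}$ relative to $\bm{A}$) to guarantee that the $60\,\si{\Hz}$ cutoff chosen for $Q_A, Q_B$ lies within the admissible $\omega_c \geq 1/\tau^{*}$ range, thereby certifying the DOB's robustness on the actual DRACO hardware.
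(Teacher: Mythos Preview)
Your singular-perturbation route is essentially the same as the paper's, and your boundary-layer matrix coincides with theirs. Two differences are worth flagging.

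First, the paper does not run Tikhonov from scratch but invokes a packaged result of Shim et al.\ that reduces robust stability to two checkable hypotheses: (a) the actual plant $P_{i\to F}$ is minimum phase, and (b) the boundary-layer matrix is Hurwitz. Condition (a) is the piece your outline skips. When you substitute the quasi-steady state into Eq.~\eqref{eq:ss_realization_1}, the DOB cancellation does make the $\bm{x}$-dynamics nominal, but the state $z$ with $\dot z = \bm{S}z + \bm{G}\bm{x}$ is the zero dynamics of the \emph{actual} plant, not the nominal one; it is not cancelled and its stability does not follow from the nominal loop being internally stable. Hence your reduced system is not ``identical to the closed loop obtained by driving the nominal plant'': it is that loop cascaded with the actual plant's zero dynamics, and you must verify $\bm{S}$ Hurwitz (equivalently, stable zeros of $P_{i\to F}$ for all admissible $M_l$) separately, as the paper does using the identified parameters.

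Second, for the boundary-layer check the paper gives an explicit factorization of the characteristic polynomial as $p_a(s)p_b(s)$ with $p_a(s)=|s\bm{I}-\bm{A}|$ and $p_b(s)=|s\bm{I}-\bm{A}+\tfrac{\bm{g}-\overline{\bm{g}}}{\overline{\bm{g}}}\bm{B}\bm{C}|$, and then uses the closed form $\bm{g}=\bm{g}(M_l)$ to see that both factors have left-half-plane roots for every $M_l$. This is more direct than the common-Lyapunov or small-gain argument you propose and dissolves the ``main obstacle'' you anticipate.
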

\begin{proof} \cite{shim2007state} proved that if the unknown variables $\bm{\phi}$, $\bm{\psi}$,$\bm{S}$, $\bm{G}$ and $\bm{g}$ are bounded with $\bm{g} \neq  0$, there exists a $\tau^* > 0 $ such that, for all $0 < \tau < \tau^*$, the DOB structure is robustly stable if 1) the zero dynamics of the actual actuator plant are stable, 2) the boundary-layer subsystem, Eq.~\eqref{eq:ss_realization_1}, is exponentially stable.
In our case,
\begin{enumerate}
    \item The actual plant, $P_{i \rightarrow F}$, has stable zeros given the identified actuator parameters of Table~\ref{tb:actuator_id}, 
    \item The system matrix of Eq.~\eqref{eq:ss_realization_2} is 
    \begin{equation*}
        \begin{bmatrix}
            \bm{A} - \frac{\bm{g}}{\overline{\bm{g}}} \bm{B C} & \frac{\bm{g}}{\overline{\bm{g}}} \bm{B C} \\
                -\bm{B C} & \bm{A + B C}
        \end{bmatrix},
    \end{equation*} and its characteristic polynomial is $p_a(s)p_b(s)$ with
    \begin{equation*}
        \begin{split}
            p_a(s) &= \begin{vmatrix} s \bf{I} - \bf{A} \end{vmatrix} \\
            p_b(s) &= \begin{vmatrix} s \bf{I} - \bf{A} + \frac{\bm{g}-\overline{\bm{g}}}{\overline{\bm{g}}} \bm{BC} \end{vmatrix},
        \end{split}
    \end{equation*}
    where $\bm{g}=\frac{3.2e4 M_l}{2.4 M_l + 3.1}$. This system matrix is always Hurwitz, since $p_a(s)$ and $p_b(s)$ have negative roots for all possible $M_l$, which results in the exponential stability for Eq.~\eqref{eq:ss_realization_2}.
\end{enumerate}
\vspace{-4mm}
\end{proof}

\subsection{Position Feedback Control}%
\label{sub:joint_position_control}

\begin{figure}
    \centering
    \includegraphics[width=1.0\linewidth]{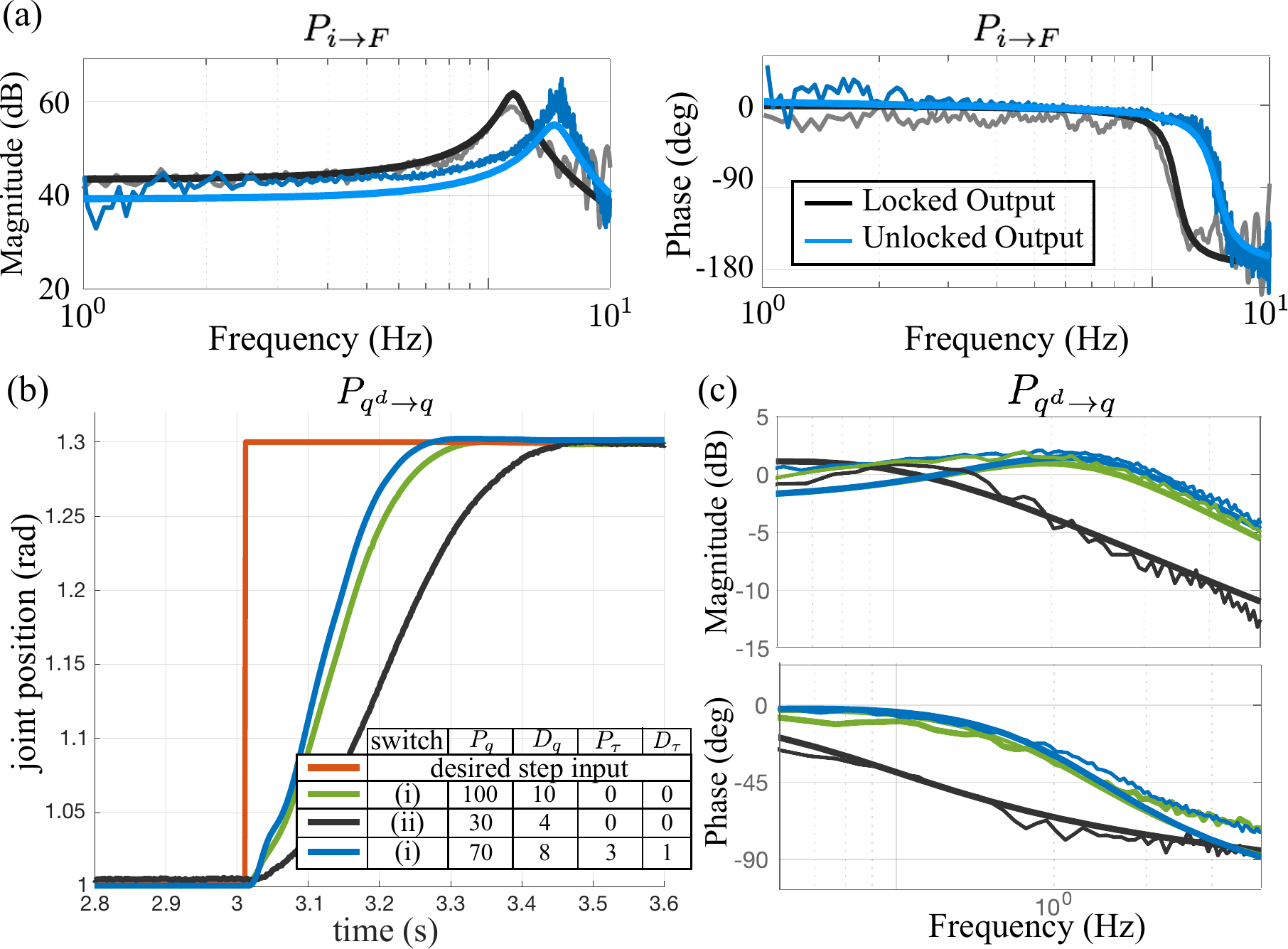}
    \caption{ (a) shows experimental and estimated bode plots of input current versus output force based on two different actuator output set-ups for system identification. (b) shows experimental step responses of three different joint position feedback controllers. Selected gains and the switch to measure feedback signals are shown in the table. (c) shows experimental and estimated joint position control performance for each controllers.}
    \label{fig:single_joint_control}
    \vspace{-4mm}
\end{figure}
\begin{figure*}
    \centering    \includegraphics[width=\linewidth]{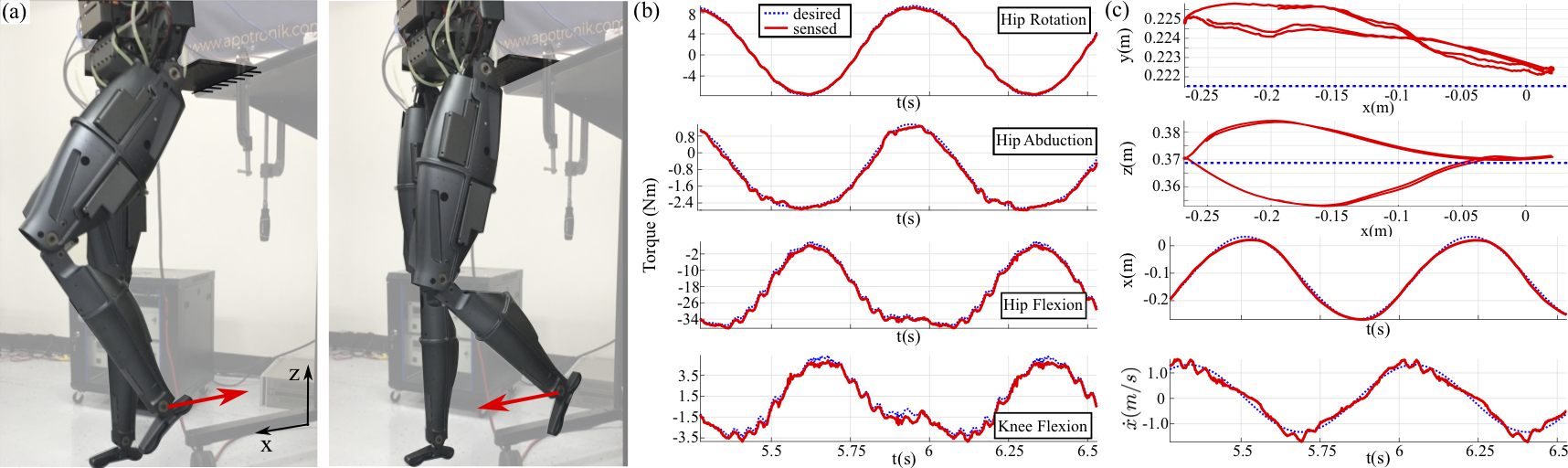}
    \caption{ \textbf{Operational Space Control Test.} (a) shows DRACO moving its left ankle following a Cartesian trajectory. (b) shows desired and measured joint torques for tracking performance. (c) shows desired and measured Cartesian position tracking performance. Both torque and Cartesian position trackers are tracking closely the reference trajectories.}
    \label{fig:osc}
    \vspace{-4mm}
\end{figure*}

In this subsection, we design different types of position controllers: 1) by measuring actuator position by either adding up motor quadrature encoders and elastomer quadrature encoders, or directly using a linear potentiometer, and 2) by including force feedback control within the position control loop. Fig.~\ref{fig:block_diagram}(c) shows our joint position control structure using PD control, $C_\bullet$, and including time delays, $T_{\bullet} = e^{-T_{\bullet}s}$. The switch labeled (i) uses the option with motor and elastomer quadrature encoders and (ii) uses the option with a linear potentiometer to measure actuator position. The force feedback control loop enclosed with a black dotted box where $C_\tau$ can be set to zero if we want to remove this loop from the joint position controller. The transfer functions of the close loop systems for each switch option can be derived from inspecting the block diagram resulting in:
\begin{align}
        \label{eq:non_collocated}
        P_{q \rightarrow q^d}^{(i)} &= \frac{C_q P_{F \rightarrow F^d} P_{F \rightarrow q}}{1 + D_q C_q P_{F \rightarrow F^d} P_{F \rightarrow q}} \\
        \label{eq:collocated}
        P_{q \rightarrow q^d}^{(ii)} &= \frac{P_{F \rightarrow q} P_{F \rightarrow F^d} C_q}{1 + D_q C_q P_{F \rightarrow F^d} N_l P_{i \rightarrow \theta} / (N_m P_{i \rightarrow F})},
\end{align}
where $P_{F \rightarrow F^d}$ represents the transfer function of the inner force control loop which has the expression,
\begin{equation}
    P_{F \rightarrow F^d} = \frac{P_{i \rightarrow F} (k_\tau N_m + C_\tau)}{1+P_{i \rightarrow F} D_\tau C_\tau}.
\end{equation}

Based on the block diagram, we design three different joint position controllers to compare their performance: 1) using motor and elastomer quadrature encoder feedback without inner force feedback control ($P_{q \rightarrow q^d}^{(i)},~C_\tau = 0$), 2) using linear potentiometer feedback without inner force feedback control ($P_{q \rightarrow q^d}^{(ii)},~C_\tau=0$), and 3) using motor and elastomer quadrature encoder feedback with inner force feedback control ($P_{q \rightarrow q^d}^{(i)},~C_\tau \neq 0$).

To compare the closed loop systems, we empirically choose gains such that the position control loops of controllers 1) and 2) from above behave as critically damped systems. To increase feedback gains, we first increase $D_q$ before the system gets unstable and then choose the highest stable $P_q$, such that the step response of the joint position controller, $P_{q^d \rightarrow q}$, does not overshoot.

In the case of using position control with the inner force control loop, i.e. controller 3) from previous, gain selection become more complex due to the dependencies between $C_\tau$ and $C_q$. We observed that the transfer function of a joint position control structure embedding a force control loop could be represented as the multiplication of two second order systems \cite{zhao2018impedance}. We then proposed a method to make the combined system critically damped given a desired natural frequency. For our comparative analysis, we increase the force loop gains and decrease the joint position control gains according to stability constraints. In this way, we emphasize the role of the inner force control loop to see its effect. 

We now compare controllers 1), 2) and 3), as represented with green, black and blue color, respectively, in Fig.~\ref{fig:single_joint_control}. In the figure, we choose gains to make the closed loop systems critically damped and match the natural frequency of controller 3) to controller 1).
Controller 1) performs better than 2) since the quadrature encoders give higher quality signals than the linear potentiometer. We now analyze controller 3) based on inner force feedback control. We notice that controller 3) only allows for smaller values of $P_{q}$ and $D_{q}$ than controller 1) for stability reasons. In conclusion, although the use of inner force feedback control reduces friction effects, it ends up reducing joint position gains which decrease position accuracy. As a result, for our locomotion tests we use controller 1) instead of controller 3).


\section{Robot Control}%
\label{sec:multi_dof_controllers}
Building on our actuator control study above, we devise and test two multi-joint controllers for DRACO. We will first evaluate DRACO's performance using and instance of Operational Space Control \cite{khatib1987unified}. After that we will evaluate DRACO using our newest unsupported dynamic locomotion controller, which consists of two parts, WBC and Time-To-Reversal (TVR) planner \cite{dh_ijrr_2019}.

\subsection{Operational Space Control}%
\label{sub:operational_space_controller}

For this test, we first fix DRACO to a table as shown in Fig.~\ref{fig:osc}(a) and generate a Cartesian trajectory for the robot's left ankle to follow in the forward direction. We command a sinusoidal trajectory with amplitude of 0.3 \si{\meter} and frequency of 1.4 \si{\Hz}. The lateral and vertical Cartesian directions of the ankle are controlled to stay at a fix point. Torque commands for each robot joint is computed according to the OSC control law,
\begin{equation}
    \bm{\tau} = \bm{A}\overline{\bm{J}}(\ddot{\bm{x}} + K_p \bm{e} + K_d \dot{\bm{e}} - \dot{\bm{J}}\dot{\bm{q}}) + \bm{b} + \bm{g},
\end{equation}
where $\bm{A}$, $\bm{b}$, $\bm{g}$, $\dot{\bm{q}}$ and $\bm{\tau}$ are inertia, coriolis, gravity
forces, joint velocities and joint torque commands written with respect to the robot's generalized coordinates. $\ddot{\bm{x}}$, $\bm{e}$ and $\dot{\bm{e}}$ are desired Cartesian trajectory accelerations, and Cartesian position and velocity errors. $\bm{J}$ is for the jacobian of the left ankle and $\overline{\bm{J}}$ is for the dynamically consistent pseudo inverse, which is defined as $\overline{\bm{J}} \triangleq \bm{A}^{-1}\bm{J}^{\top}(\bm{JA}^{-1}\bm{J}^{\top})^{-1}$.
The robot then sends the computed joint torques command through the EtherCAT network using the embedded Axon boards for joint control. The Axon boards implement each a torque controller as described in Section \ref{sec:actuator_force_control}. The result is shown in Fig.~\ref{fig:osc} and demonstrates the performance of OSC on DRACO. Because of the use of a DOB, the joint controllers display robustness despite load uncertainty.

\subsection{Unsupported Dynamic Balancing}%
\label{sub:dynamic_balancing_controller}

\begin{figure*}
    \centering
    \includegraphics[width=1.0\linewidth]{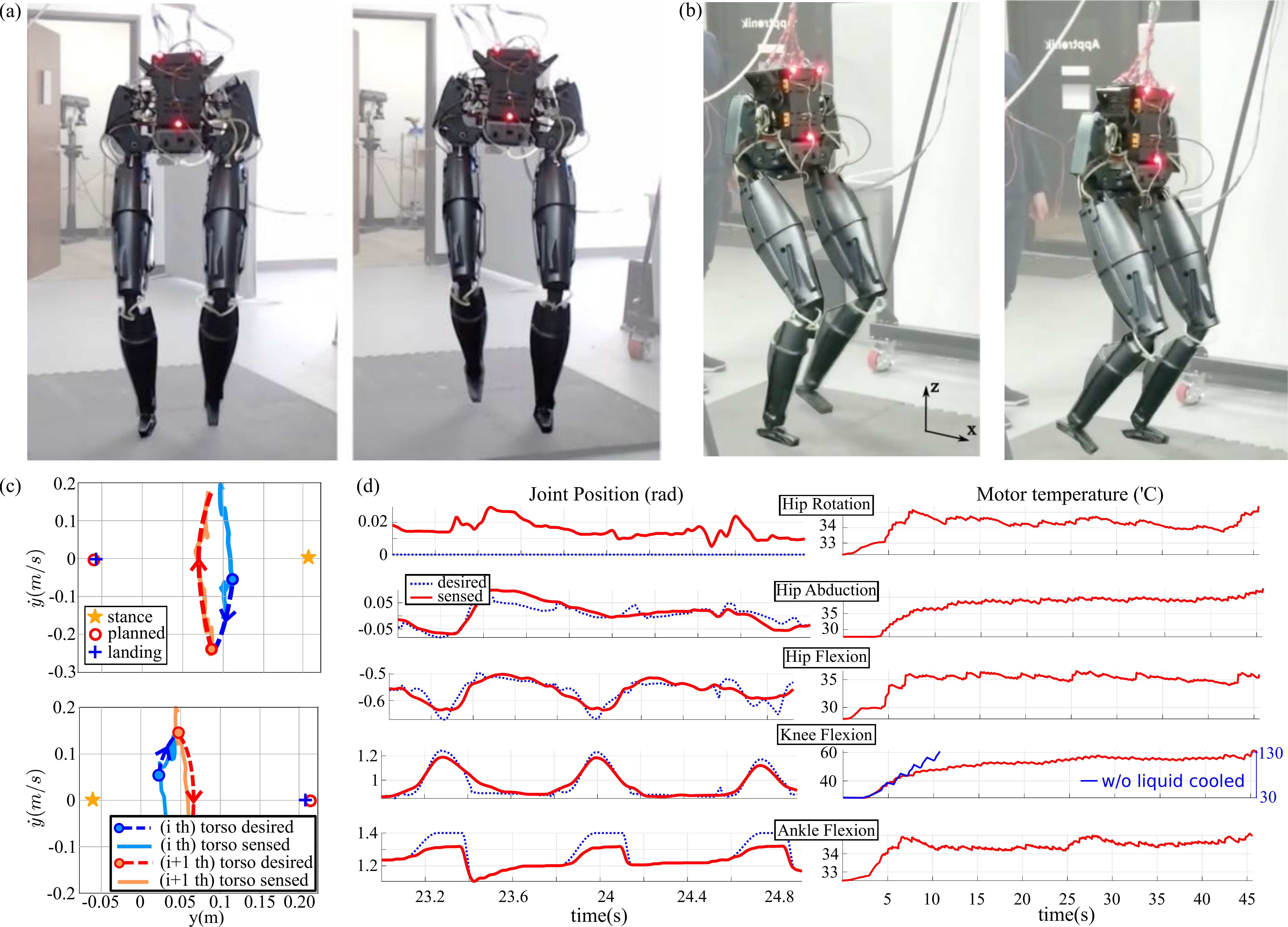}
    \caption{The sequences (a) and (b) show DRACO dynamically self-balancing without any mechanical support. (c) shows the phase space plot of the robot's CoM in the lateral direction with respect to the robot's frame for two consecutive balancing steps. (d) shows desired and measured joint positions as well as estimated core winding motor temperatures during dynamic balancing.}
    \label{fig:dynamic_balancing}
    \vspace{-4mm}
\end{figure*}

Here, we demonstrate the ability of DRACO to achieve unsupported dynamic balancing by means of the WBC and TVR algorithms (see Fig. \ref{fig:walking_block}). Dynamic balancing is achieved via stabilizing leg contact changes (coordinated by the state machine block) triggered by either pre-defined temporal specifications or foot contact sensors. The sequence of contact phases is represented by a list of tuples specifying the phase name and its time duration, i.e. \{($DS,~0.01 \si{\second}$) $\rightarrow$ ($LF_i,~0.03 \si{\second}$) $\rightarrow$ ($SW_i,~0.33 \si{\second}$) $\rightarrow$ ($LN_i,~0.01 \si{\second}$)\}\footnote{$DS,~LF_i,~SW_i$ and $LN_i$ mean Double Support, Lifting, Swing, and Landing phases respectively. The subscript $i$ represents the swing leg type, either the robot's right leg or its left leg.}. In addition, DRACO can detect sudden velocity changes on its ankle movement as a trigger mechanism to detect contact. We use ankle velocity trigger as sensors to terminate the  $SW_i$ phase.

The $\mathbf{x}_k$ symbols represent a set of operational space tasks. Arrows, $\bm{x}_k \rightarrow \bm{x}_l$, express priorities in that the left set of tasks $x_k$ has higher priority than the right set of tasks $x_l$ for arbitrary $k$ and $l$ indexes. In turn, WBC handles priorities by solving a prioritized inverse kinematics problem. Below are the task and priority assignments to the phases that we use for the robot's dynamic balancing behavior:  
\begin{align*}
    DS,~ LN,~ LF &: \{ \bm{x}_1 \rightarrow \bm{x}_3 \} \\
    SW &: \{ \bm{x}_1 \rightarrow \bm{x}_2 \rightarrow \bm{x}_3 \rightarrow \bm{x}_4 \},
\end{align*}
where the task sets are defined as
\begin{itemize}
    \item $\bm{x}_1 \in \mathbb{R}^{2}$: Right and Left Hip Rotation Task
    \item $\bm{x}_2 \in \mathbb{R}$: Swing Foot Ankle Flexion Task 
    \item $\bm{x}_3 \in \mathbb{R}^{3}$: Torso Roll, Torso Pitch, Torso Height Task
    \item $\bm{x}_4 \in \mathbb{R}^{3}$: Swing Foot Position Task
\end{itemize}
The Right and Left Hip Rotation Task and the Torso Roll, Pitch, and Height Task are set to $[0 \si{\degree},~0 \si{\degree}]$ and $[0 \si{\degree}, 0 \si{\degree}, 1 \si{\meter}]$, respectively, in order to make DRACO face forward and maintain its torso upright. For the $SW$ task assignment, we incorporate the Swing Foot Ankle Flexion Task with a desired value of $80 \si{\degree}$ in order to detect sudden velocity changes when touching the ground. The swing Foot Position Task is driven by b-spline trajectory computation that steers the swinging foot to a desired landing location given by the TVR planner. After all operational space tasks are specified, the WBC controller shown in Fig.~\ref{fig:walking_block} provides the computation of sensor-based feedback control loops and motor commands to achieve the desired goals. As a result, the entire body of the robot, the actual plant, will execute the commands to dynamically balance without support. For this particular experiment we use both an IMU in combination with a motion capture system for CoM state estimation.

The computed motor commands are then sent out to the Axon embedded controllers for real-time execution. For joint position control we rely on the elastomer quadrature encoders for feedback as explained before. However, we turn off the inner force feedback controllers described in Section \ref{sub:joint_position_control} to increase the joint position accuracy. This is important to land the small feet near the desired foot locations with minimal errors. 

The behavior resulting from integrating the new liquid-cooled viscoelastic in the DRACO biped robot with the WBC and TVR control algorithms is shown in Fig.~\ref{fig:dynamic_balancing}. DRACO is able to achieve unsupported dynamic balancing without falling. The accompanying video demonstrates this capability. The data is plotted in Fig.~\ref{fig:dynamic_balancing} for two consecutive steps. In the phase space plots of that figure, we can see the velocity of the torso being effectively reversed based on the TVR planner which aims precisely at achieving such outcome. Reversing velocity allows the robot to converge to the coordinate origin while dynamically stepping. Readers are referred to Appendix in \cite{fast} for more details about the phase space plot of the linear inverted pendulum. In addition, the use of liquid cooling safely regulates core winding temperatures of the electric motors. In that same figure, we can see that when turning off the liquid cooling system, the knee motors increases temperature beyond 130 $\si{\degree}$ which can damage the motor windings. In contrast, when turning on liquid cooling, the motor temperature remains below 60 $\si{\degree}$ all the time during balancing which is considered a very safe temperature. Overall, without the cooling system, we could not achieve agile locomotion for our lightweight system due to overheating.

\begin{figure}
    \centering
    \includegraphics[width=1.0\linewidth]{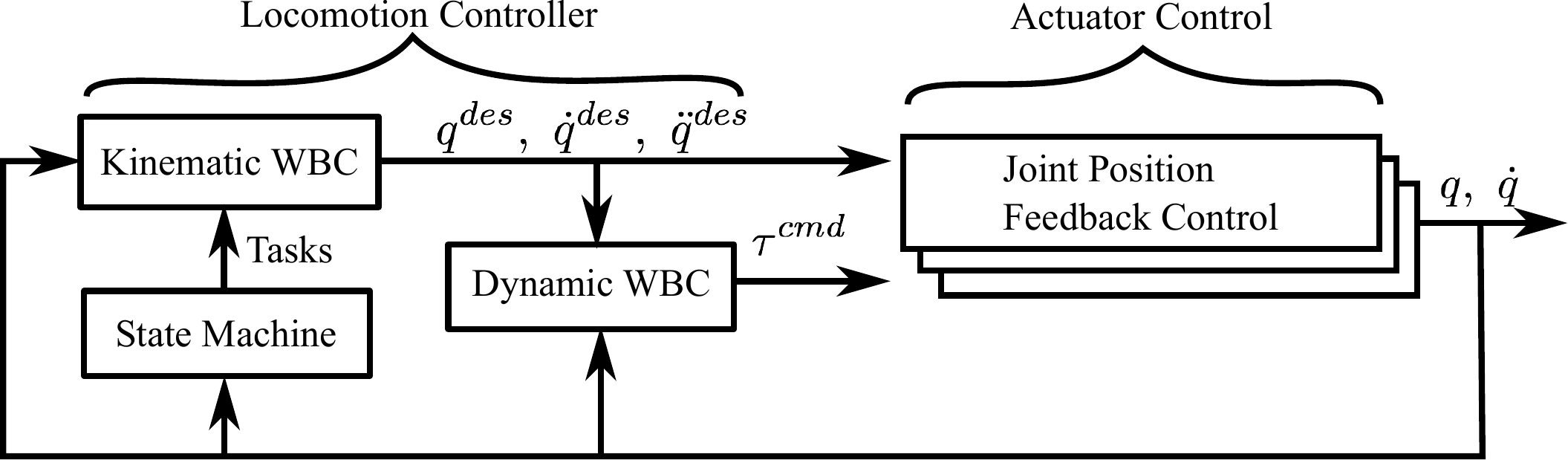}
    \caption{This figure describes a control architecture to achieve unsupported dynamic balancing.}
    \label{fig:walking_block}
    \vspace{-4mm}
\end{figure}

\section{Concluding remarks}
\label{sec:conclusion}
Overall, our main contribution has been on the control, and evaluation of a new adult-size humanoid bipedal robot, dubbed DRACO with control considerations on the VLCA actuators. DRACO is able to achieve unsupported dynamic balancing with only ten actuators, and despite the ankle actuators being much weaker than human ankles. This performance is possible due to a combination of mechanics that reduce distal mass, the use of high power dense actuators, high quality sensing, the integration of a robust planner, and stiff controllers that control the robot's body, foot, and joint positions with high accuracy.

In the future, we plan to remove the use of the motion capture system for CoM state estimation and rely on IMU and vision. For localization we plan to explore the integration of stereo RGB cameras for  dense range data and high frame-rates. Another part of this project will consists on the addition of an upper body with two robotic arms for loco-manipulation behaviors.

\section*{Acknowledgment}
The authors would like to thank the members of the Human Centered Robotics Laboratory at The University of Texas at Austin and the company Apptronik for their great help and support. This work was supported by the Office of Naval Research, ONR Grant \#N000141512507 and the National Science Foundation, NSF Grant \#1724360.

\addtolength{\textheight}{-12cm}   

\bibliographystyle{IEEEtran}
\bibliography{2019_RAL_DRACO.bib}

\end{document}